\newcommand{\thickhline}{%
    \noalign {\ifnum 0=`}\fi \hrule height 0.5pt
    \futurelet \reserved@a \@xhline
}
\newtheorem{theorem}{Theorem}
\definecolor{LightCyan}{rgb}{0.88,1,1}
\newcommand{\cmark}{\ding{51}}
\newcommand{\xmark}{\ding{55}}
\newcommand{\swarehouse}{$\textsc{small}$}
\newcommand{\lwarehouse}{$\textsc{large}$}
\newcommand{\mwarehouse}{$\textsc{medium}$}
\title{\LARGE \bf
Multi-Goal Multi-Agent Pickup and Delivery*}
\author{Qinghong Xu,$^{1}$ Jiaoyang Li,$^{2}$ Sven Koenig$^{2}$ and Hang Ma$^{1}$
\thanks{*The research at Simon Fraser University was supported
by the Natural Sciences and Engineering Research Council
of Canada under grant number RGPIN2020-06540 as well as a Canada Foundation for Innovation John R. Evans Leaders Fund award. The research at the University of Southern California was supported by the National Science Foundation under grant numbers 1409987, 1724392, 1817189, 1837779, 1935712, 2121028, and 2112533 as well as a gift from Amazon Robotics.}
\thanks{$^{1}$Qinghong Xu and Hang Ma are with the School of Computing Science, Simon Fraser University, Burnaby, BC V5A1S6, Canada
        {\tt\small \{qxa8, hangma\}@sfu.ca}}%
\thanks{$^{2}$Jiaoyang Li and Sven Koenig are with the Department of Computer Science, University of Southern California, Los Angeles, CA 90007, USA
        {\tt\small \{jiaoyanl, skoenig\}@usc.edu}}%
}
\begin{document}
\maketitle
\thispagestyle{empty}
\pagestyle{empty}

\begin{abstract}
In this work, we consider the Multi-Agent Pickup-and-Delivery (MAPD) problem, where agents constantly engage with new tasks and need to plan collision-free paths to execute them. To execute a task, an agent needs to visit a pair of goal locations, consisting of a pickup location and a delivery location. We propose two variants of an algorithm that assigns a sequence of tasks to each agent using the anytime algorithm Large Neighborhood Search (LNS) and plans paths using the Multi-Agent Path Finding (MAPF) algorithm Priority-Based Search (PBS). LNS-PBS is complete for well-formed MAPD instances, a realistic subclass of MAPD instances, and empirically more effective than the existing complete MAPD algorithm CENTRAL. LNS-wPBS provides no completeness guarantee but is empirically more efficient and stable than LNS-PBS. It scales to thousands of agents and thousands of tasks in a large warehouse and is empirically more effective than the existing scalable MAPD algorithm HBH+MLA*. LNS-PBS and LNS-wPBS also apply to a more general variant of MAPD, namely the Multi-Goal MAPD (MG-MAPD) problem, where tasks can have different numbers of goal locations. 
\end{abstract}
\section{INTRODUCTION }
In many real-world multi-robot systems, robots have to constantly attend to new tasks and plan collision-free paths to execute them. For example, warehouse robots need to move inventory shelves to workstations, where human workers pick products from the shelves to fulfill the orders of customers. This problem has been studied as Multi-Agent Pickup-and-Delivery (MAPD) \cite{MaAAMAS17}. In MAPD, each task has a release time and a sequence of two goal locations, namely a pickup location and a delivery location. For a warehouse robot, the pickup location is the storage location of an inventory shelf in the warehouse, and the delivery location  is the location of the workstation that needs a product stored on the inventory shelf. To execute a task, an agent (i.e., robot) needs to first visit its pickup location at or after its release time and then visit its delivery location. 

To solve a MAPD instance, the agents need to decide which tasks they are going to execute and plan collision-free paths to execute them effectively. Most existing MAPD algorithms separate the task-assignment and path-finding parts, i.e., they first assign tasks to the agents based on an estimation of the actual path costs and then use a Multi-Agent Path Finding (MAPF) \cite{SternSOCS19} algorithm for planning actual collision-free paths for the agents. Such decoupled MAPD algorithms can be categorized into (1) those that assign only one task to each agent at a time and plan paths for the agents segment by segment \cite{MaAAMAS17}, i.e., each call to the path planner computes a plan that moves the agents only from their current locations to their next goal locations; (2) assign only one task to each agent but plan paths that move the agents from their current locations through a sequence of goal locations \cite{GrenouilleauICAPS21}; and (3) assign a sequence of tasks to each agent and plan paths for the agents segment by segment \cite{LiuAAMAS19, ChenIEEE21}.  Assigning only one task to each agent can lead to bad task assignments since it does not take the subsequent tasks into account, and planning paths segment by segment can lead to long paths.

In addition, there is some work that focuses only on the path-finding part of the MAPD problem. For instance, Surynek \cite{SurynekAAAI21} proposes the optimal Multi-Goal MAPF algorithms HCBS and SMT-HCBS for planning collision-free paths for a set of goal locations (where the ordering of the goal locations is not specified). However, these algorithms solve only one-shot problems where each agent has only one task, and their scalability is limited. Li et al. \cite{LiAAAI21b} propose the efficient lifelong MAPF algorithm Rolling-Horizon Collision Resolution (RHCR) for planning collision-free paths for a sequence of goal locations. It uses a rolling-horizon framework that repeatedly calls a windowed MAPF algorithm to resolve collisions for only a few timesteps ahead. Such windowed MAPF algorithms run significantly faster than regular MAPF algorithms but typically do not provide a completeness guarantee since they can lead to deadlocks due to their shortsightedness. 
\begin{table}[t]
\caption{Research related to MAPD. ``Lifelong'' means that agents can constantly engage with new tasks. ``Online'' means that the entire task set is unknown in the beginning, and new tasks can enter the system at any time. ``Assign tasks (seq. task)'' means that an algorithm can assign a task sequence (rather than a single task) to each agent. ``Find paths (seq. goals)'' means that an algorithm can plan a path for a sequence of goal locations (rather than segment by segment) for each agent. ``Complete (well-formed)'' means that an algorithm is complete for well-formed MAPD instances.}
\label{MAPD_solver}
\centering
\Huge
\scalebox{0.3}{%
\begin{tabular}{lccccc}
\thickhline
& \multirow{2}{*}{lifelong}& \multirow{2}{*}{online} &assign tasks &find paths  & complete \\
& & &(seq. tasks) & (seq. goals) & (well-formed)\\
\thickhline 
CENTRAL \cite{MaAAMAS17}&\cmark&\cmark &\xmark  & \xmark& \cmark\\
TA-Hybrid  \cite{LiuAAMAS19} &\cmark&\xmark &\cmark&\xmark & \cmark\\
HBH+MLA* \cite{GrenouilleauICAPS21}&\cmark&\cmark &\xmark&\cmark & \cmark\\
RMCA \cite{ChenIEEE21}&\cmark&\cmark &\cmark&\xmark & \xmark\\
(SMT-)HCBS \cite{SurynekAAAI21}&\xmark&N/A&\xmark&\cmark &\cmark\\
RHCR \cite{LiAAAI21b}&\cmark&\cmark &\xmark&\cmark & \xmark\\
\rowcolor{LightCyan}LNS-PBS &\cmark&\cmark &\cmark&\cmark &\cmark\\
\rowcolor{LightCyan}LNS-wPBS &\cmark&\cmark &\cmark&\cmark &\xmark\\
\thickhline
\end{tabular}
}
\end{table}

The main contributions of our work are as follows: We propose a decoupled algorithm that assigns a sequence of tasks to each agent using the anytime algorithm Large Neighborhood Search (LNS) and plans paths through a sequence of goal locations using the MAPF algorithm Priority-Based Search (PBS). More specifically, we propose two variants of this algorithm: LNS-PBS and LNS-wPBS. The first variant focuses on completeness and effectiveness. PBS is, in general, incomplete. Combined with the idea of ``reserving dummy paths" from \cite{LiuAAMAS19}, LNS-PBS is complete on well-formed MAPD instances, a realistic subclass of MAPD instances. The second variant focuses on  efficiency and stability. LNS-wPBS uses the windowed MAPF algorithm of RHCR. Therefore, the runtime of LNS-wPBS is controlled by the user-specified runtime limit for the anytime task-assignment algorithm and the user-specified size of the time window for the windowed MAPF algorithm. Empirically, LNS-PBS and LNS-wPBS often yield smaller service times than state-of-the-art MAPD algorithms, and LNS-wPBS  scales to thousands of agents and thousands of tasks in a large warehouse.

As a further contribution, we study two extensions of the MAPD problem. First, LNS-PBS and LNS-wPBS can extend to a more general variant of the MAPD problem, namely the Multi-Goal MAPD (MG-MAPD) problem, where tasks have different numbers of goal locations. This problem models the scenario where a warehouse robot may need to deliver an inventory shelf to multiple workstations because they all have requested products stored on the same inventory shelf. We prove that LNS-PBS is complete for well-formed MG-MAPD instances. Second, LNS-PBS and LNS-wPBS can handle different MAPD settings. This includes the online setting \cite{MaAAMAS17}, where the entire task set is unknown in the beginning and new tasks can enter the system at any time, the offline setting \cite{LiuAAMAS19}, where the entire task set is known in the beginning, and the semi-online setting (which has not been studied before), where the entire task set is (only) partially known in the beginning. We compare existing MAPD-related algorithms against our algorithms LNS-PBS and LNS-wPBS in  \Cref{MAPD_solver}.

\section{RELATED WORK}
A MAPD algorithm consists of two components: task assignment and path finding. In this section, we discuss existing research that relates to them.

\subsection{Task Assignment}
The task-assignment problem is related to the multi-robot task allocation literature. Gerkey et al. \cite{GerkeyIJRR04} and  Korsah et al. \cite{KorsahIJRR13} provide taxonomies for this topic. The Hungarian algorithm \cite{Kuhn55thehungarian} is a combinatorial optimization algorithm that finds the maximum-weight matching in a bipartite graph in polynomial time. Other related problems include the Traveling Salesman Problem (TSP), Vehicle Routing Problem (VRP), and  Dial-a-Ride Problem. Shaw \cite{Shaw97anew} introduces the local search algorithm Large Neighborhood Search (LNS) to construct a customer schedule for the VRP. The idea is to start with an initial schedule and iteratively improve it. In every iteration, some customers are removed from the schedule based on a removal heuristic. These customers are then inserted back into the schedule (at potentially different positions) by a greedy heuristic. 

\subsection{Path Finding}
The path-finding problem is related to the Multi-Agent Path Finding (MAPF) literature. Many MAPF algorithms exist, such as the complete and optimal MAPF algorithm Conflict-Based Search (CBS) \cite{SharonAI15}, its improved variant Improved CBS (ICBS) \cite{BoyarskiIJCAI15}, and the incomplete and suboptimal MAPF algorithm prioritized planning \cite{ErdmannAl87}. 
Given a total priority ordering of the agents, prioritized planning computes the time-minimal paths of the agents in order of their priorities such that the path of an agent does not collide with the paths of all higher-priority agents. Prioritized planning is very efficient, but a pre-defined total priority ordering can make prioritized planning ineffective and even incomplete for hard MAPF instances. Priority-Based Search (PBS) \cite{MaAAAI19a} attempts to address this issue by using depth-first search to find a good total priority ordering. Nevertheless, prioritized planning is faster than PBS for easy MAPF instances in general.

Multi-Label A* (MLA*) \cite{GrenouilleauICAPS21} was invented for planning paths for pairs of goal locations, namely the pickup location and the delivery location of a task. Li et al. \cite{LiAAAI21b} generalize MLA* for planning paths for longer sequences of goal locations. 

\subsection{Combined Task Assignment and Path Finding}
Ma et al. \cite{MaAAMAS17} present the complete MAPD algorithm CENTRAL for well-formed MAPD instances, a realistic subclass of MAPD instances. CENTRAL uses the Hungarian algorithm \cite{Kuhn55thehungarian} to assign each agent one task and then uses CBS for planning collision-free paths for the agents segment by segment that visit the goal locations of their assigned tasks. CENTRAL is designed for online MAPD, where tasks can enter the system at any time. TA-Hybrid \cite{LiuAAMAS19} is designed for the offline setting, where all tasks are known in the beginning. TA-Hybrid  formulates the task-assignment problem as a TSP and uses the anytime TSP algorithm LKH3 \cite{Helsgaun2017} to find a task sequence for each agent. It then uses ICBS for planning collision-free paths for the agents segment by segment that visit the goal locations of their assigned tasks.  Grenouilleau et al. \cite{GrenouilleauICAPS21} propose an H-value-Based Heuristic (HBH) to assign an agent its next task greedily and then prioritized planning and MLA* for planning collision-free paths for the agents that visit the pairs of goal locations of their assigned tasks.  

The MAPD algorithms above are decoupled, i.e., they first assign tasks to the agents based on an estimation of the actual path costs and then use a MAPF algorithm for planning actual collision-free paths for the agents. Chen et al. \cite{ChenIEEE21} propose the coupled MAPD algorithm RMCA, that assigns tasks and plans paths simultaneously. Therefore, its task assignment is informed by the actual path costs. For the task-assignment part, RMCA uses LNS to compute a task sequence for each agent. It first uses a standard regret-based marginal-cost heuristic to construct an initial solution. It then iteratively removes and reassigns a subset of tasks based on a greedy heuristic. For the path-finding part, it uses prioritized planning with sequential A* calls for planning collision-free paths for the agents.

\section{PROBLEM DEFINITION}
In this section, we formalize a generalization of the MAPD problem, namely the Multi-Goal MAPD (MG-MAPD) problem.  MAPD is a special case of MG-MAPD with only two goal locations for each task. A MG-MAPD instance consists of a set of $M$ agents $\{a_1, a_2, ..., a_M\}$ and an undirected graph $G=(V, E)$, whose vertices $V$ represent the set of locations and whose edges $E$ represent the connections between locations that the agents can move along. Let $p_i(t)$ denote the location of agent $a_i$ at timestep $t$. Agent $a_i$ starts at its start location $p_i(0)$; at each timestep, it either moves to an adjacent location or waits at its current location.  A vertex collision occurs between agents $a_i$ and $a_j$ at timestep $t$ iff $p_i(t)=p_j(t)$; an edge collision occurs iff $p_i(t)=p_j(t+1)$ and $p_i(t+1)=p_j(t)$. 

At each timestep, the system can release new tasks. Each task $\tau_i$ is characterized by a sequence of goal locations and a finite release time $r_i \in \mathbb{N}$; we let $s_i$ denote its first goal location and $g_i$ denote its last goal location.  To execute $\tau_i$, an agent needs to visit all goal locations of $\tau_i$ in sequence.  When an agent arrives at $s_i$, it starts to execute $\tau_i$ at or after timestep $r_i$ and cannot execute other tasks; the completion time of $\tau_i$ is the time when the agent arrives at $g_i$. Agents that are assigned tasks are called task agents; otherwise, they are called free agents. 

Not all MG-MAPD instances are solvable; in this work, we consider well-formed MG-MAPD instances, a realistic subclass of MG-MAPD instances \cite{MaAAMAS17, LiuAAMAS19}. We define two types of endpoints: (1) all goal locations of tasks are called task endpoints,  and (2) all start locations of agents are called non-task endpoints. A MG-MAPD instance is well-formed iff the start location of each agent is different from all task endpoints and, for any two endpoints, there exists a path between them that traverses no other endpoints. 

The problem of MG-MAPD is to assign tasks to agents and plan collision-free paths for the agents to execute all tasks assigned to them. The effectiveness of a MG-MAPD algorithm is measured by the average service time. The service time of a task is the difference between its completion time and its release time, i.e., the time that the task spends in the system. The efficiency is measured by the average runtime per timestep. We say that a MG-MAPD algorithm is stable iff its runtime at different timesteps is controllable or predictable.   

\section{LNS-PBS AND LNS-wPBS}
In LNS-PBS and LNS-wPBS, each agent maintains 
(1) a dummy endpoint, i.e., an endpoint that it can move to and stay indefinitely at without collisions (initially, this dummy endpoint is its start location), 
(2) a task sequence, that consists of the uncompleted tasks that it has to execute,
(3) a corresponding goal sequence, that consists of all goal locations of the tasks in its task sequence plus its dummy endpoint at the end, 
and (4) a path, that moves the agent from its current location through all locations in its goal sequence without collisions. 
\Cref{framework_code} without the blue parts (i.e., Lines [6-8]) shows how LNS-PBS works. Many of its steps (not shown in the pseudo-code but introduced later), including the use of dummy endpoints, the strategy of which unexecuted tasks can be assigned to agents, and the modification of PBS, are designed to ensure its completeness.
When new tasks are released by the system, tasks are deferred from the previous iteration,
or a task agent becomes a free agent [Line 2], we start a new iteration and update the four items maintained by the agents: 
First, we use LNS to (re)assign agents those unexecuted tasks, denoted by $\mathcal{T}$, all of whose goal locations are different from the dummy endpoints of the agents [Line 3]. (The other unexecuted tasks are deferred to the next iteration and assigned then.) This destroys the current task sequences of all agents (except for the tasks they are currently executing) and replans new task sequences for them. Then, we assign each agent a (potentially new) dummy endpoint [Line 4]  and use PBS to (re)plan their paths [Line 5]. We will explain Lines [3], [4], and [5] in Sections \ref{TA}, \ref{sec:dummy}, and \ref{PF}, respectively. 
We will prove the completeness of LNS-PBS for well-formed MG-MAPD instances in Section \ref{sec:completeness} and finally introduce LNS-wPBS (i.e., Lines [6-8]) in Section \ref{sec:LNS-wPBS}.
\begin{algorithm}[t]
\caption{LNS-\textcolor{blue}{w}PBS}
\begin{algorithmic}[1]
\While{true}
\If{there are new or deferred tasks

\textbf{or} any task agent becomes a free agent}
\State (Re)assign tasks in $\mathcal{T}$ to agents using LNS;
\State Assign a dummy endpoint to each agent;
\State Plan paths for all agents using {\color{blue}w}PBS;
{\color{blue}\ElsIf{agents have moved $w$ timesteps}
\State Assign a dummy endpoint to each agent;
\State Plan paths for all agents using wPBS;}
\EndIf
\State Agents follow their paths for one timestep;
\EndWhile
\end{algorithmic}
\label{framework_code}
\end{algorithm}

\subsection{Large Neighborhood Search (LNS)} \label{TA}

LNS starts with an initial task assignment generated by Hungarian-based insertion also (introduced below) and iteratively improves it using Shaw removal and regret-based re-insertion (introduced below) until a user-specified runtime limit is reached. In each iteration, LNS accepts the new task assignment if it yields a smaller estimated service time than the old task assignment. In this section, 
we use the term ``estimated'' time to indicate the time calculated under the assumption that all agents follow their shortest paths on graph $G$ that ignore the collisions between each other.
To avoid having to plan a path for a long task sequence on Line 5, we truncate the task sequence of each agent to a size of at most the user-specified maximum size $C$. The remaining tasks are deleted from the task sequences and will be assigned in future iterations, for example, when a task agent completes its current $C$ tasks and becomes a free agent.

\textbf{Hungarian-Based Insertion.} We use the Hungarian algorithm \cite{Kuhn55thehungarian} to construct the initial task assignment, i.e., the task sequences of all agents. Each call to the Hungarian algorithm adds one task to the end of the task sequence of each agent. We repeatedly call it until all tasks in $\mathcal{T}$ have been assigned to agents. In each call, the Hungarian algorithm takes a cost matrix as input (whose rows correspond to agents and whose columns correspond to tasks) and outputs an agent-task assignment with the minimum sum of costs. Previous work \cite{MaAAMAS17} defines an element of the cost matrix as the estimated time for an agent to move from its current location to the first goal location of a task. This choice prioritizes those tasks whose first goal locations are near the current locations of the agents, without considering the release and completion times of the tasks.  Instead, we define an element of the cost matrix as the estimated completion time of a task $\tau_i$ executed by agent $a_i$ under the assumption that $\tau_i$ is inserted at the end of the task sequence of $a_i$.

\textbf{Shaw Removal.} After the construction of the initial task sequences of all agents, we use a Shaw removal operator \cite{RopkeTS06} to remove a group of interrelated tasks from the task sequences. We let $d(u, v)$ represent the shortest-path distance from location $u \in V$ to location $v \in V$. We define the relatedness of two tasks $\tau_i$ and $\tau_j$ as
\begin{align*} 
r(\tau_i, \tau_j) &= \omega_1(d(g_i, g_j)+d(s_i, s_j))\\
&+\omega_2(|t(s_i)-t(s_j)|+|t(g_i)-t(g_j)|),
\end{align*}%
where $t(s_i)$ represents the estimated time when an agent starts to execute  $\tau_i$ (i.e., when the agent reaches the first goal location $s_i$ of $\tau_i$) and $t(g_i)$ represents the estimated completion time of  $\tau_i$. The first term expresses the spatial relatedness of the tasks, and the second term expresses their temporal relatedness. The spatial relatedness and temporal relatedness are weighted by $\omega_1$ and $\omega_2$, respectively. The Shaw removal operator works as follows: We first choose a task $\tau^*$ randomly. We then remove $\tau^*$ and a group of $N-1$ tasks in decreasing order of their relatedness to $\tau^*$, where the neighborhood size $N$ is a user-specified parameter. We also tested other removal operators (e.g., the removal of random tasks and the removal of ``bad'' tasks \cite{RopkeTS06}) in our experiments, but the above removal operator outperformed the others. 

\textbf{Regret-Based Re-Insertion.} We then use a re-insertion operator to re-insert the removed tasks into the task sequences. Specifically, we use the regret-based operator from \cite{ChenIEEE21}\cite{RopkeTS06}.
Let $f_{i}(k, j)$ denote the estimated total service time of the task sequences obtained when inserting task $\tau_i$ at the $j$th position of the task sequence of agent $a_k$ (here, the task sequences do not contain the other removed tasks). Let  $f_{i}^{(1)}$ denote the estimated total service time of the task sequences obtained when inserting task $\tau_i$ at its best position, namely the one with the smallest estimated total service time, i.e., $f_{i}^{(1)}=\min \{f_{i}(k, j) \mid k\in\{1,...,M\}, j\in\{0, ..., l_k\}\}$, where $l_k$ is the number of tasks in the task sequence of $a_k$. Let $f_{i}^{(2)}$ denote the estimated total service time of the task sequences obtained when inserting task $\tau_i$ at its second-best position, namely the one with the second-smallest estimated total service time. The regret of a task $\tau_i$ is defined as $f_{i}^{(2)}-f_{i}^{(1)}$, i.e.,  the difference in the estimated total service time of inserting $\tau_i$ at its best two positions. 
The regret-based re-insertion operator works as follows: We choose the task with the maximum regret, insert it at its best position, and update the regret of the remaining tasks based on the resulting task sequences. We repeat the process until all removed tasks have been re-inserted into the task sequences.

\subsection{Dummy-Endpoint Assignment} \label{sec:dummy}
We assign dummy endpoints one by one with task agents first and free agents afterward. The dummy endpoint of each agent needs to be different from the already assigned dummy endpoints, all goal locations of the uncompleted tasks, and the old dummy endpoints of the other $M-1$ agents in the previous iteration.
When choosing a dummy endpoint for an agent, we consider the task endpoints in increasing order of their shortest-path distances to the last goal location of the last task of the agent. If there are no available task endpoints to assign, we use its start location as its dummy endpoint instead. 

\subsection{Priority-Based Search (PBS)}
\label{PF}
PBS \cite{MaAAAI19a} is an incomplete and suboptimal two-level MAPF algorithm. On the high level, PBS builds a priority tree (PT) and performs a depth-first search on it to construct a priority ordering of the agents. PBS starts with the root node, that contains an empty priority ordering and a time-minimal path for each agent that ignores collisions. When resolving a collision between two agents, PBS generates two child nodes and adds an additional priority relation to each of them: It adds to one child node that the first agent involved in the collision has a higher priority than the second one and vice versa for the other child node. On the low level, PBS uses A* for planning time-minimal paths for agents that are consistent with the priority ordering generated by the high level (i.e., lower-priority agents are not allowed to collide with higher-priority agents). PBS prunes the child node iff no such paths exists. Li et al. \cite{LiAAAI21b} generalize the low level of PBS to planning time-minimal paths for agents with sequences of goal locations. 

We modify the low level of PBS to make PBS complete for well-formed MG-MAPD instances.
Before PBS starts, we save the (old) paths computed in the previous iteration. In the first iteration, the old paths are the paths that keep the agents at their start locations indefinitely. These old paths might not visit the goal locations in the current goal sequences but are guaranteed to be collision-free. When PBS generates the root node of the PT, it plans a time-minimal path for each agent that avoids the old paths of all other $M$-1 agents. When PBS resolves a collision between two agents, for each agent whose path needs to be re-planned in each child node, PBS plans a time-minimal path for it that avoids collisions with the new paths of all higher-priority agents and the old paths of all other agents (that do not have a higher priority than it). When a MG-MAPD instance is well-formed, this modification always finds a path for each agent (which we prove below), and thus no PT node is pruned. Since PBS performs depth-first search, the number of expanded PT nodes is no larger than the maximum depth of the PT, which is $\mathcal{O}(M^2)$~\cite{MaAAAI19a}.

\subsection{Completeness of LNS-PBS}\label{sec:completeness}
\begin{theorem}
Given a well-formed MG-MAPD instance with a finite number of tasks, LNS-PBS is guaranteed to find collision-free paths in finite time that allow each agent to execute all tasks assigned to it.
\end{theorem}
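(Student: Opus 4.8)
The plan is to prove two claims and combine them: (a) every iteration of LNS-PBS terminates and, on termination, returns collision-free paths along which each agent can execute its current task sequence; and (b) only finitely many iterations are triggered, after which all tasks have been completed.

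For (a), LNS stops at the user-specified runtime limit and the dummy-endpoint assignment is a finite greedy loop, so the real content is that the modified low level of PBS always succeeds. I would isolate this as a lemma: \emph{for any (partial) priority ordering, the modified low level returns a valid path for every agent}; granting the lemma, PBS never prunes a PT node, so its depth-first search halts after expanding $\mathcal{O}(M^2)$ nodes with a complete, collision-free solution. I would prove the lemma by an outer induction over iterations --- the invariant being that the paths from the previous iteration are mutually collision-free and that each eventually keeps its agent at a distinct \emph{old} dummy endpoint, which holds in the first iteration because every agent rests at its own (distinct) start location --- and, within an iteration, an induction on priority rank. To replan an agent $a_i$ against the already-fixed new paths of its higher-priority agents and the old paths of the remaining agents, I would exhibit an explicit witness path: $a_i$ first follows the suffix (from the current timestep) of \emph{its own} old path to its old dummy endpoint $o_i$, which is collision-free with all old paths (they formed a valid solution) and with every fixed higher-priority new path (each such path was planned to avoid the old path of every agent not above it, in particular $a_i$'s old path); then $a_i$ waits at $o_i$ until all other agents have reached and settled at their parking locations; and finally, since the instance is well-formed, $a_i$ visits its goal sequence by concatenating, for each consecutive pair of endpoints, a path that traverses no other endpoint, ending parked at its own new dummy endpoint.

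What makes the last step go through is a collection of distinctness facts that I would read off from \Cref{TA} and \Cref{sec:dummy}: $\mathcal{T}$ contains only tasks all of whose goals differ from every (old) dummy endpoint; every new dummy endpoint differs from the already-assigned new dummy endpoints, from every goal of every uncompleted task, and from the old dummy endpoints of the other $M-1$ agents; and old dummy endpoints are pairwise distinct by the induction hypothesis. Together these imply that once the other agents settle, their parking locations are pairwise distinct and disjoint from the entire goal sequence of $a_i$, so the well-formed detours never meet an occupied vertex, and since those agents are then stationary there are no edge collisions either. I would also check that the dummy-endpoint assignment can never get stuck, because an agent's own start location is always a legal choice: it is a non-task endpoint and it differs from the current dummy endpoint of every other agent.

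For (b), the iteration-triggering events are finite in number: there are finitely many tasks, hence finitely many release events; ``a task agent becomes a free agent'' happens at most once per task, since an agent becomes free only by completing its nonempty sequence; and a task can be deferred only while one of its goals coincides with a current dummy endpoint, which after the previous dummy-endpoint reassignment is impossible for any then-uncompleted task, so deferral-triggered iterations are also bounded by the number of tasks. Thus there is a last iteration; afterwards every agent follows a fixed finite valid path, so any agent still carrying a task would complete it and thereby trigger a further iteration --- a contradiction --- whence at the last iteration every agent is already free and no task is pending, i.e.\ all tasks are completed; with (a), all tasks are executed along collision-free paths in finite time. The step I expect to be the real obstacle is the low-level lemma, specifically guaranteeing a path for an agent that begins an iteration at an arbitrary, possibly non-endpoint, location while the paths it must avoid are only \emph{eventually} static; the ``retreat along your own old path, wait, then route between endpoints'' construction is the device that handles this, and pinning down the dummy-endpoint-versus-goal distinctness invariants precisely enough that no parking location ever obstructs the well-formed routing is the delicate bookkeeping.
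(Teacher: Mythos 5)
Your proposal is correct and takes essentially the same approach as the paper: the core of both is the witness-path construction for the modified low level of PBS (follow one's own old path to the old dummy endpoint, wait until higher-priority agents settle on their new dummy endpoints and the rest on their old ones, then traverse the goal sequence using well-formedness and the distinctness of dummy endpoints from goals), which shows that no PT node is ever pruned and paths are found in finite time. Your part (b) --- bounding the trigger events and arguing from a last iteration --- is only a minor reorganization of the paper's argument that every task eventually enters $\mathcal{T}$, is assigned, and is completed, and it is equally valid.
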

\begin{proof}
We first prove that, given a goal sequence for each agent, LNS-PBS (or, more specifically, Line 5 of \Cref{framework_code}) is guaranteed to find collision-free paths in finite time that allow each agent to visit all goal locations in its goal sequence: For the root node, this property holds since such a path exists for each agent. For example, the agent can first follow its old path and stay at its old dummy endpoint until all other $M-1$ agents have completed their old paths and stay at their old dummy endpoints indefinitely. The agent can then visit all goal locations in its goal sequence in order and finally stay at its new dummy endpoint indefinitely without having to pass through the old dummy endpoints of all other $M-1$ agents. This is so since the MG-MAPD instance is well-formed and these dummy endpoints are different from all goal locations in its goal sequence and its new dummy endpoint. Similarly, for each non-root node, this property holds since such a path exists for each agent whose path needs to be replanned. For example, the agent can first follow its old path and stay at its old dummy endpoint until (1) all higher-priority agents have completed their new paths and stay at their new dummy endpoints indefinitely and (2) all other agents have completed their old paths and stay at their old dummy endpoints indefinitely. The agent can then visit all goal locations in its goal sequence in order and finally stay at its new dummy endpoint indefinitely without having to pass through the new dummy endpoints of the higher-priority agents and the old dummy endpoints of all other agents. This is so since the MG-MAPD instance is well-formed and these dummy endpoints are different from all goal locations in its goal sequence and its new dummy endpoint. Therefore, no PT node is pruned, and LNS-PBS is guaranteed to find collision-free paths in finite time that allow each agent to visit all goal locations in its goal sequence.

We then prove that each task is eventually assigned to and completed by some agent. The last task enters the system at some (finite) timestep $t$. We assume for a proof by contradiction that, from then on, all still unexecuted tasks remain unexecuted. Let timestep $t' \ge t$ be the earliest (finite) timestep when all agents are free. At timestep $t'$, no unexecuted task is in $\mathcal{T}$ since, otherwise, at least one task would be assigned to an agent by LNS-PBS and completed by the agent, as argued above. Therefore, at timestep $t'+1$, at least one task was deferred and is added to $\mathcal{T}$. Then, at least one task is assigned to an agent by LNS-PBS and completed by the agent, as argued above, which contradicts the assumption. Therefore, at least one additional unexecuted task is completed by an agent. Applying the argument repeatedly shows that each task is eventually assigned to and completed by some agent. 
\end{proof}

\subsection{LNS-wPBS}\label{sec:LNS-wPBS}
LNS-wPBS is a variant of LNS-PBS that, unlike LNS-PBS, uses windowed PBS (wPBS) for planning collision-free paths for only the first $w$ timesteps and then plan path again once the agents have moved for $w$ timesteps. This makes LNS-wPBS more efficient than LNS-PBS but incomplete because there is no guarantee that the agents can reach their goal locations in a finite number of timesteps. Nevertheless, LNS-wPBS always successfully finds solutions in our experiments. 

Since LNS-wPBS gives up the completeness guarantee, we further simplify it in three respects:
First, LNS-wPBS does not defer any tasks, i.e., $\mathcal{T}$ consists of all unexecuted tasks.
Second, wPBS uses the original low level of PBS instead of our modified version, i.e., it does not consider the old paths of the agents. 
Third, the assigned dummy endpoints need only to be pairwise different from each other without worrying about the goal locations of uncompleted tasks and the old dummy endpoints.

\section{LOOK-AHEAD HORIZONS}
In this section, we study the semi-online setting, where the system has partial knowledge of future tasks and can thus plan for them. In this case, we consider all known tasks when generating task sequences. We divide tasks into batches, where all tasks in one batch are released at the same timestep. We define the look-ahead horizon as the number of batches that we know in advance. For example, if the system releases one task every five timesteps (= 0.2 tasks per timestep), a look-ahead horizon of 1 means that, at timestep 0, we know the tasks that will be released at timestep 0 and 5. In the offline setting, the look-ahead horizon is infinite.  

If the system knows an incoming task ahead of its release time, then we can send an agent to its first goal location and let the agent wait for the task to be released.
\begin{figure}[t]
\centering
\includegraphics[width=0.3\textwidth]{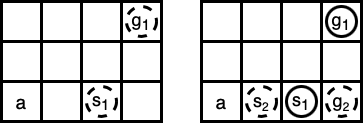}
\caption{A MG-MAPD instance with one agent at timesteps 0 (left figure) and 2 (right figure). Solid circles represent the goal locations of the tasks released before the current timestep and dashed circles represent the ones of the tasks released at the current timestep. ``a'' represents the start location of the agent.}
\label{look_ahead_example}
\end{figure}
For example, in Fig. \ref{look_ahead_example}, the system releases task $\tau_1$ at timestep 0 and task $\tau_2$ at timestep 2. If we have no knowledge of $\tau_2$ at timestep 0, then we assign the agent $\tau_1$ at timestep 0 and then $\tau_2$ at timestep 2. Thus, the completion times of $\tau_1$ and $\tau_2$ are timesteps 5 and 11, respectively, resulting in an average service time of $(5-0+11-2)/2=7$. However, if we use a look-ahead horizon of 1, we can let the agent first move to $s_2$, wait for one timestep, start to execute $\tau_2$ at timestep 2, and start to execute $\tau_1$ at timestep 5. Thus, the completion times of $\tau_2$ and $\tau_1$ are timesteps 4 and 8, respectively, resulting in an average  service time of $(8-0+4-2)/2=5$.

\section{EXPERIMENTS}
We first compare LNS-PBS and LNS-wPBS empirically with the existing MAPD algorithms CENTRAL, RMCA, and HBH+MLA* on MAPD instances in both online and offline settings. We do not compare them with the other three existing MAPD algorithms in TABLE \ref{MAPD_solver} because TA-Hybrid does not work for the online setting, and (SMT-)CBS and RHCR need external algorithms to assign tasks to agents.
We then compare different task-assignment algorithms based on Hungarian and LNS on MG-MAPD instances and finally test LNS-wPBS in a semi-online setting. We use the warehouse environment \swarehouse\ from \cite{LiuAAMAS19}: As shown in Fig. \ref{warehouse_map}, it is a 4-neighbor grid map of size $35 \times 21$  that consists of four columns of endpoints (blue and orange cells) on both left- and right-hand slides of the map and $2 \times 5$ blocks of shelves (horizontal 10-cell-wide strips of black cells) in the middle, each surrounded by task endpoints (blue cells) on the rows above and below it. 
To show the scalability of  LNS-PBS and LNS-wPBS,  we further enlarge \swarehouse\ to \mwarehouse\ and \lwarehouse\, i.e., they are maps of sizes $101 \times 81$ and $187 \times 153$ that contain $8 \times 40$ and $15 \times 76$ strips of shelves, respectively.
We use 500 tasks for \swarehouse, 1,000 tasks for \mwarehouse, and different numbers of tasks for \lwarehouse\ (specified later). The system releases $f$ tasks per timestep. The goal locations of each task are chosen from all task endpoints randomly. The number of goal locations of each task in MG-MAPD instances is chosen from 1 to 5 randomly.
Our experiments are performed on a macOS 2.3 GHz Intel Core i5 with 8 GB RAM. All algorithms are implemented in C++. The implementations of LNS-PBS and LNS-wPBS are based on the RHCR codebase  \cite{LiAAAI21b}. The implementations of the other algorithms are from the original authors. For LNS-PBS and LNS-wPBS, we set the neighborhood size to $N=2$ (we tried $N=$ 2, 4, 8, 32, and 64 for 500 tasks and found that 2 was best) and the runtime limit for LNS to 1s. We use parameters $\omega_1=9$ and $\omega_2=3$ from \cite{RopkeTS06} for the Shaw removal operator. We set the truncated size of the task sequences to $C=2$ (we tried $C=$ 1, 2, 3 for 500 tasks and found that 2 was best). We set the time window of LNS-wPBS to $w=10$ timesteps. We use ``st'' to short for the average service time per task and ``rt'' for the average runtime (ms) per timestep.

\begin{figure}[t]
\centering
\includegraphics[width=0.2\textwidth]{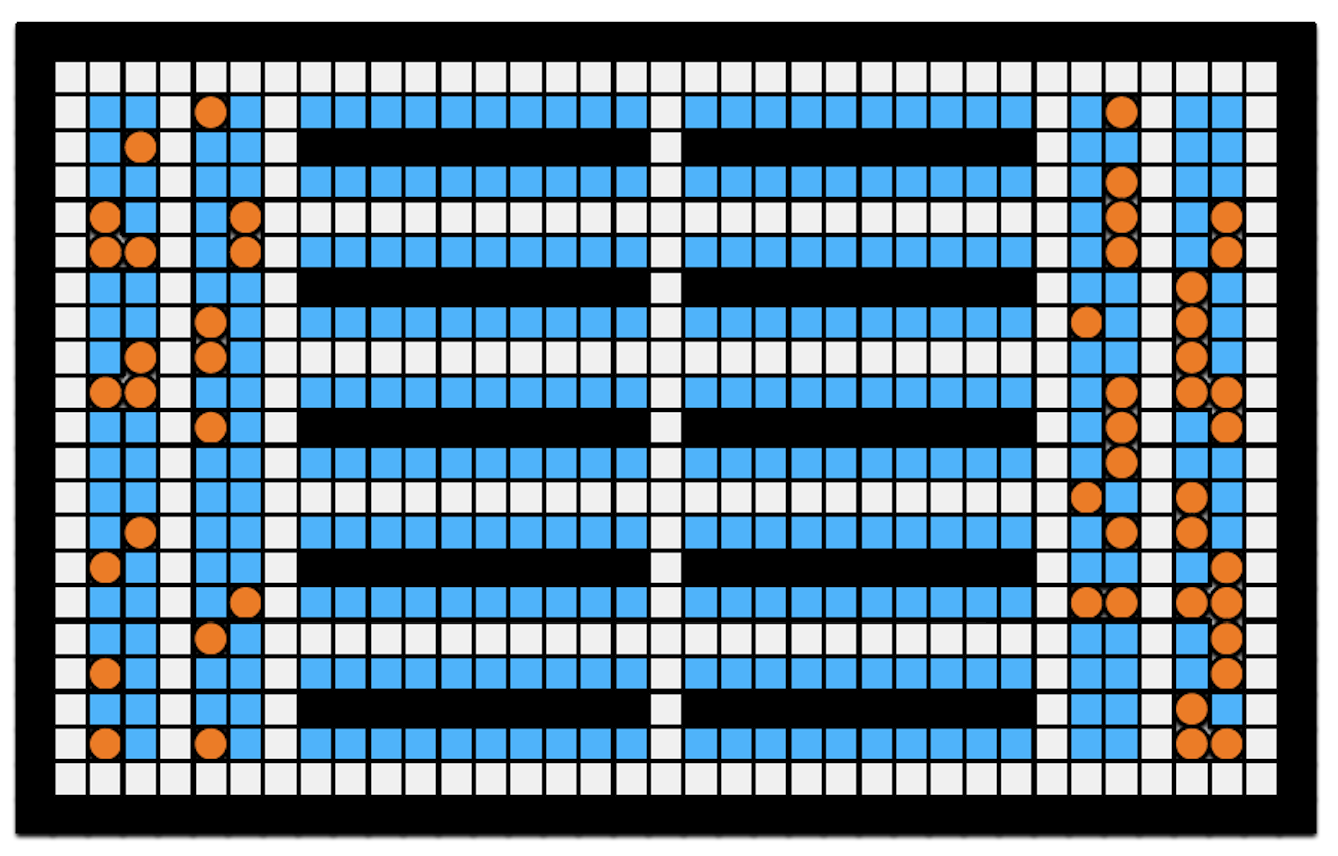}
\caption{Simulated warehouse environment \swarehouse\ from \cite{LiuAAMAS19}. Black cells are blocked. Blue and orange cells represent task endpoints and non-task endpoints, respectively. 
}
\label{warehouse_map}
\end{figure}

\textbf{Small Warehouse.}
TABLE \ref{small_warehouse} compares LNS-PBS and LNS-wPBS with CENTRAL and RMCA on MAPD instances in \swarehouse.
For the complete MAPD algorithms CENTRAL and LNS-PBS, LNS-PBS yields smaller service times than CENTRAL in most cases, with the largest gap being $26\%$ (on the MAPD instance with $f=2$ and $M=50$). It also runs faster than CENTRAL except on very small MAPD instances, i.e., MAPD instances with low task frequencies and small numbers of agents. We do not report the results for the complete MAPD algorithm HBH+MLA* because its service times are worse than those of CENTRAL.
On the other hand, the results for the incomplete algorithms RMCA and LNS-wPBS in the online setting are mixed: neither algorithm dominates the others with respect to efficiency or effectiveness. Nevertheless, LNS-wPBS is more efficient and effective than RMCA in the offline setting.
\begin{table}[t]
\caption{Results on MAPD instances in \swarehouse. ``N/A'' means that the total runtime exceeds 30 minutes. Gap is the average gap measured between the complete MAPD algorithms LNS-PBS and CENTRAL and between the incomplete MAPD algorithms LNS-wPBS and RMCA.}
\centering
\Huge
\scalebox{0.3}{
\begin{tabular}{cc|rr|rr|rr|rr}
\thickhline
& & \multicolumn{2}{c}{CENTRAL} & \multicolumn{2}{|c|}{LNS-PBS}& \multicolumn{2}{|c|}{RMCA}& \multicolumn{2}{c}{LNS-wPBS}\\
\hline
\multicolumn{1}{c}{$f$} &\multicolumn{1}{c|}{$M$}  & \multicolumn{1}{c}{st} & \multicolumn{1}{c|}{rt}   & \multicolumn{1}{c}{st} & \multicolumn{1}{c|}{rt}  & \multicolumn{1}{c}{st} & \multicolumn{1}{c|}{rt} & \multicolumn{1}{c}{st} & \multicolumn{1}{c}{rt} \\
\thickhline
\multirow{5}{*}{0.2} & 10  & 29.77 & \textbf{28.16} & 27.92& 313.45& \textbf{26.74}& 200.08& 27.87& 316.08 \\
& 20 & 26.70 & \textbf{136.21} & 25.33 &294.94&\textbf{24.28}& 200.74& 25.67& 316.50\\
& 30  & 25.56 & 305.78 & 25.10& \textbf{292.04}&\textbf{23.27}&201.88& 24.69 &298.29\\
& 40  & 25.46 & 415.25 & 24.30&286.58&\textbf{22.62}& \textbf{202.98}&24.58 &291.88 \\
& 50 & 25.05 & 757.40 &24.09&277.72&\textbf{22.37}& \textbf{205.00}&24.39 &292.96 \\
\hline \multicolumn{2}{c|}{Gap}  &  & & -4.2\%&+206.2\% &&&+6.7\%&+50.0\%\\
\hline
\multirow{5}{*}{0.5} & 10  & 109.71 & \textbf{51.23} &116.59 & 400.44 & \textbf{101.62}& 438.57 & 117.44& 382.50 \\
& 20  & 27.99 & \textbf{172.36} & 26.91& 646.19&\textbf{25.44}& 496.28& 27.52 &617.43\\
& 30  & 26.23 & 512.04 & 25.26&667.61&\textbf{23.66}& \textbf{501.24}&25.72 &635.44  \\
& 40  & 25.39 & 1,017.49 & 24.65& 667.25&\textbf{22.73}& \textbf{503.49}&24.84 &657.18 \\
& 50  & 24.94 & 1,736.70 & 23.94& 666.01&\textbf{22.44}& \textbf{508.45}& 24.76 &645.86 \\
\hline \multicolumn{2}{c|}{Gap} & &&   -1.6\%&+178.1\% &&&+10.4\%&+19.1\% \\
\hline
\multirow{5}{*}{1} & 10  &285.75 &\textbf{65.70} & 273.48&448.81&269.76&  464.67& \textbf{266.77} & 419.59\\
& 20  & 75.13 & \textbf{266.76} & 67.21& 880.60&\textbf{59.12}& 851.97&67.20 & 762.55\\
& 30  & 31.41 & \textbf{492.12} & 28.82 & 1,030.93 &\textbf{25.59}& 974.76& 28.05 & 947.47 \\
& 40 & 28.33 & 1,381.56 &25.28 & 1,042.05&\textbf{23.67}& 987.04& 25.62 &\textbf{960.76} \\
& 50  & 27.38 & 3,238.17 & 24.42 & 1,055.77&\textbf{23.01}& 995.00& 25.30& \textbf{958.01}\\
\hline \multicolumn{2}{c|}{Gap} &  & & -8.8\%&+166.1\% &&&+8.0\%&-5.8\%\\
\hline
\multirow{5}{*}{2} 
& 10 & 388.21 & \textbf{81.35} & 361.59& 258.75& 371.27& 231.32& \textbf{356.90} & 229.33\\
& 20  & 162.00 & 424.18 &140.27&477.73&146.81& 444.33&\textbf{140.22}& \textbf{420.59}\\
& 30 & 85.89 & 702.22  & 75.45&749.36& 77.75& 635.75& \textbf{74.30}& \textbf{597.24}\\
& 40  & 57.53  & 1,440.20 &44.55&1,307.76&43.49& 798.31& \textbf{41.90}&\textbf{752.98}\\
& 50 & 41.43 & 2,206.70 &30.46 & 1,249.02&28.88& 927.25& \textbf{28.30}  & \textbf{893.02}\\
\hline \multicolumn{2}{c|}{Gap} & & &  -16.2\%&+36.9\% & & &-3.6\%&-4.3\% \\
\hline
\multirow{5}{*}{5} & 10 & 455.16 &\textbf{85.32} &412.75 &157.27& 435.70& 99.57 &\textbf{408.77} & 109.84\\
& 20 & 229.55 & 422.41 &\textbf{197.28}& 244.39 &209.55& \textbf{184.11}& 197.51 & 187.50\\
& 30  & 147.76 & 1,012.82& 126.41 & 373.18 &132.06& \textbf{268.07}& \textbf{123.95} &272.18\\
& 40  & 108.28 & 1,745.05 & \textbf{90.01}& 627.75&96.81& \textbf{362.65}&91.01&364.22 \\
& 50& 86.90 & 2,686.08 & \textbf{70.31}& 914.49 &74.32& 425.26 & 72.25&\textbf{422.82}\\
\hline \multicolumn{2}{c|}{Gap} & & &  -14.7\%&-30.1\% & & & -5.3\%& +2.7\%\\
\hline
\multirow{5}{*}{10} & 10  & 478.17& 92.96&438.71& 117.76&458.23& \textbf{56.68} &\textbf{431.76} &65.62\\
& 20 &  242.18 & 375.23   &217.33 & 168.63&228.9& \textbf{101.20}& \textbf{215.74} & 110.00\\
& 30  & 165.13 &869.85 & 146.56& 254.68&154.28& \textbf{152.34}& \textbf{144.11} & 163.94 \\
& 40 & 128.39 & 1,723.10 &110.41 & 381.89 &115.04& 208.32&\textbf{109.10}&\textbf{203.33}\\
& 50  & 106.70 & 7,442.20 & \textbf{88.75} &602.85 &94.29& 246.96& 89.33& \textbf{243.87}\\
\hline \multicolumn{2}{c|}{Gap} & & &-12.0\%&-5.7\% & &&-5.7\%&+5.6\%\\
\hline
\multirow{5}{*}{\rotatebox[origin=c]{90}{offline}} & 10  & 501.11& 76.03&  432.28 &58.83 &443.86& 102.07&\textbf{428.39} & \textbf{13.63}\\
& 20 & 263.55 & 374.87  &226.26 &62.16 &230.18& 134.69&\textbf{223.86} &\textbf{25.93} \\
& 30  &187.31& 19,471.88& 159.89&91.85&161.20& 228.56&\textbf{156.04} &\textbf{35.38} \\
& 40 &N/A& N/A&125.67&125.03&126.68& 324.39&\textbf{122.46}& \textbf{60.66}\\
& 50  &N/A& N/A& 107.58&164.69&104.01& 259.34&\textbf{103.12}& \textbf{62.40}\\
\hline \multicolumn{2}{c|}{Gap} & & &-11.5\%&-57.9\% & &&-2.7\%&-81.8\%\\
\thickhline
\end{tabular}
\label{small_warehouse}
}
\end{table}

\textbf{Medium and Large Warehouses.}
TABLE \ref{large_warehouse_fixed_tasks} compares LNS-PBS and LNS-wPBS with the scalable MAPD algorithms HBH-MLA* and RMCA on MAPD instances in \mwarehouse.
Both RMCA and LNS-PBS suffer from scalability issues: they reach the total runtime limit of 1.5h for large numbers of agents. We do not report the results for CENTRAL because its scalability is even worse than that of RMCA and LNS-PBS.
Although we report mixed results of LNS-wPBS and RMCA in the online setting in \swarehouse, here, LNS-wPBS dominates RMCA with respect to both efficiency and effectiveness.
HBH+MLA* turns out to be the most efficient MAPD algorithm in this setting, but LNS-wPBS yields smaller service times (by more than $9\%$) than HBH+MLA* on every MAPD instance, with the maximum gap being $17\%$. 
To further compare the two scalable MAPD algorithms HBH+MLA* and LNS-wPBS, we use a thousand agents with thousands of tasks in \lwarehouse\ and report the reults in TABLE \ref{large_warehouse_fixed_agents}. LNS-wPBS again is slower but yields smaller service times than HBH+MLA* on all MAPD instances.

\begin{table}[t]
\caption{Results on MAPD instances in \mwarehouse\ with $f=50$. ``N/A'' means that the total runtime exceeds 1.5h. 
}
\label{large_warehouse_fixed_tasks}
\centering
\Huge
\scalebox{0.3}{
\begin{tabular}{c|rr|rr|rr|rr}
\thickhline
 & \multicolumn{2}{c|}{HBH+MLA*} & \multicolumn{2}{c|}{RMCA} & \multicolumn{2}{c|}{LNS-wPBS} & \multicolumn{2}{c}{LNS-PBS}\\
\hline
\multicolumn{1}{c|}{$M$}  & \multicolumn{1}{c}{st} & \multicolumn{1}{c|}{rt}    & \multicolumn{1}{c}{st} & \multicolumn{1}{c|}{rt}  & \multicolumn{1}{c}{st} & \multicolumn{1}{c|}{rt} & \multicolumn{1}{c}{st} & \multicolumn{1}{c}{rt} \\
\thickhline
 100 & 362.70&\textbf{1.99}  &329.58 &565.76&\textbf{300.90}& 87.35 & 301.78 & 345.36\\
200 & 207.76&\textbf{6.75} & 192.67&2,072.98&176.81 & 220.28 & \textbf{176.13}& 3,065.95\\
300 & 157.11&\textbf{14.89} &147.42 &4,734.94&139.33 & 465.78& \textbf{137.97} & 8,844.98\\
400 & 136.40&\textbf{32.59} &126.44 &9,906.40&\textbf{123.32}& 806.54 & N/A & N/A\\
500 & 125.42&\textbf{65.79} & N/A & N/A&\textbf{113.78}& 1,385.90 & N/A & N/A\\
\thickhline
\end{tabular}
}
\end{table}

\begin{table}[t]
\caption{Results on MAPD instances  in \lwarehouse\ with $M=1{,}000$ and $f=100$.}
\label{large_warehouse_fixed_agents}
\centering
\Huge
\scalebox{0.3}{
\begin{tabular}{c|rr|rr|r}
\thickhline
 & \multicolumn{2}{c|}{HBH+MLA*} & \multicolumn{2}{c|}{LNS-wPBS}&\\
\hline
tasks & \multicolumn{1}{c}{st} & \multicolumn{1}{c|}{rt} &  \multicolumn{1}{c}{st} & \multicolumn{1}{c|}{rt} & \multicolumn{1}{c}{st gap}\\
\thickhline
1,000 &162.98 & \textbf{373.10}&\textbf{155.00} & 9,288.45 & -4.8\%\\
2,000 & 209.89& \textbf{468.74}&\textbf{193.22} & 7,792.24 & -7.9\%\\
3,000 & 258.74& \textbf{346.44}& \textbf{233.99} & 8,173.96 & -10.5\%\\
4,000 & 307.59&\textbf{400.26}& \textbf{274.54} & 7,730.04 & -10.7\%\\
5,000 & 356.60 &\textbf{487.90}&  \textbf{314.42} & 5,038.37 & -11.8\%\\
\thickhline
\end{tabular}
}
\end{table}

\textbf{Runtime Variance.}
In warehouses, we care not only about the average runtime per timestep but also the runtime variance. We thus plot the runtimes of the algorithms at different timesteps in Fig. \ref{time}. LNS-wPBS is more stable than LNS-PBS and CENTRAL with respect to the runtimes at different timesteps. We also partition the runtime into two parts, namely the task-assignment and path-finding runtimes, in Fig. \ref{time_breakdown}. LNS-PBS and LNS-wPBS always take about 1s to assign tasks, whereas the task-assignment runtime of CENTRAL is less stable. The path-finding runtime of LNS-wPBS is more evenly distributed than those of LNS-PBS and CENTRAL.

\begin{figure}[t]
\centering
\includegraphics[width=0.42\textwidth]{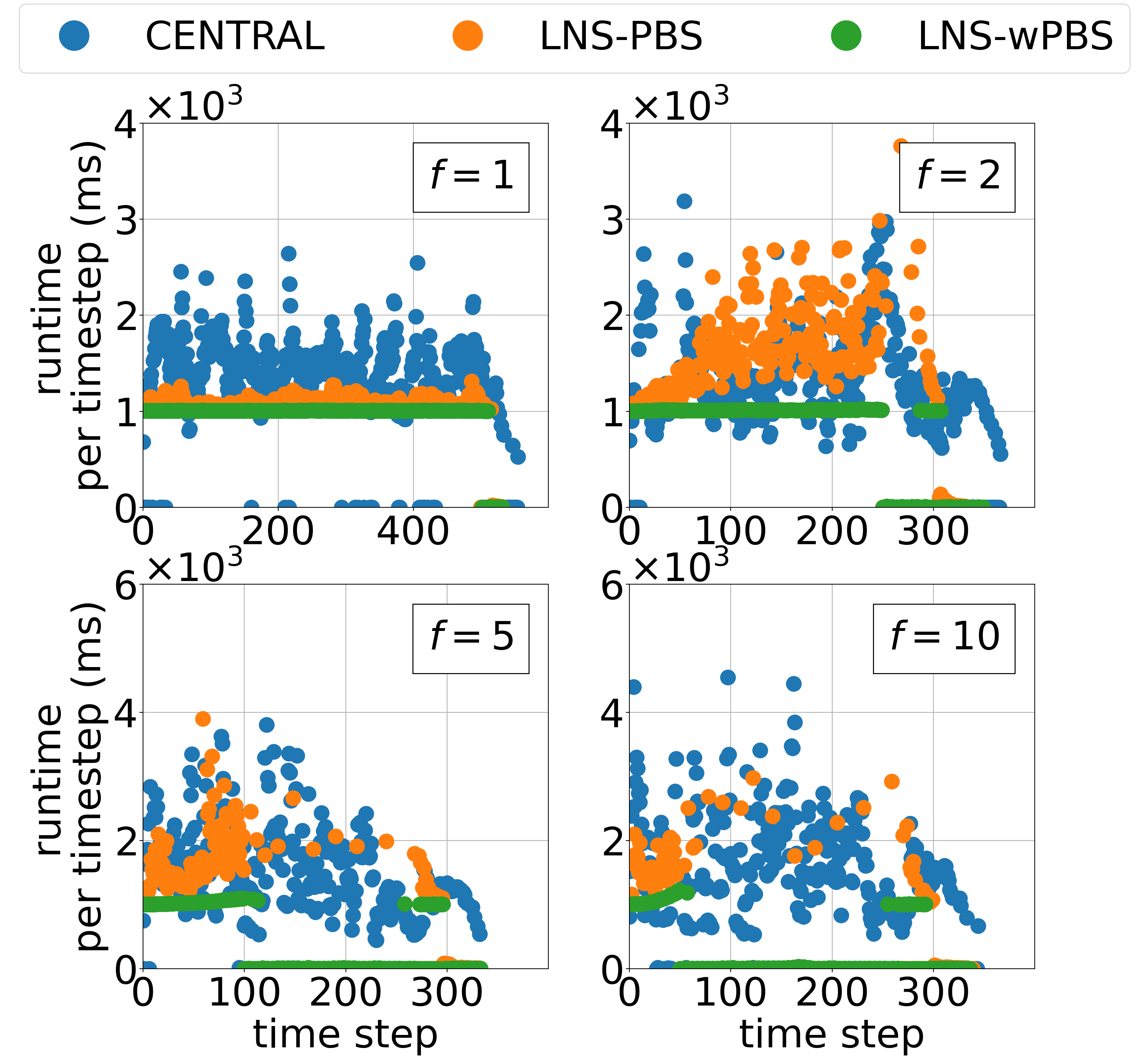}
\caption{Runtime on MAPD instances in \swarehouse\ with $M=40$. When an algorithm is not triggered at a timestep, we plot the dot at 0s.}
\label{time}
\end{figure}

\begin{figure}[t]
\centering
\includegraphics[width=0.48\textwidth]{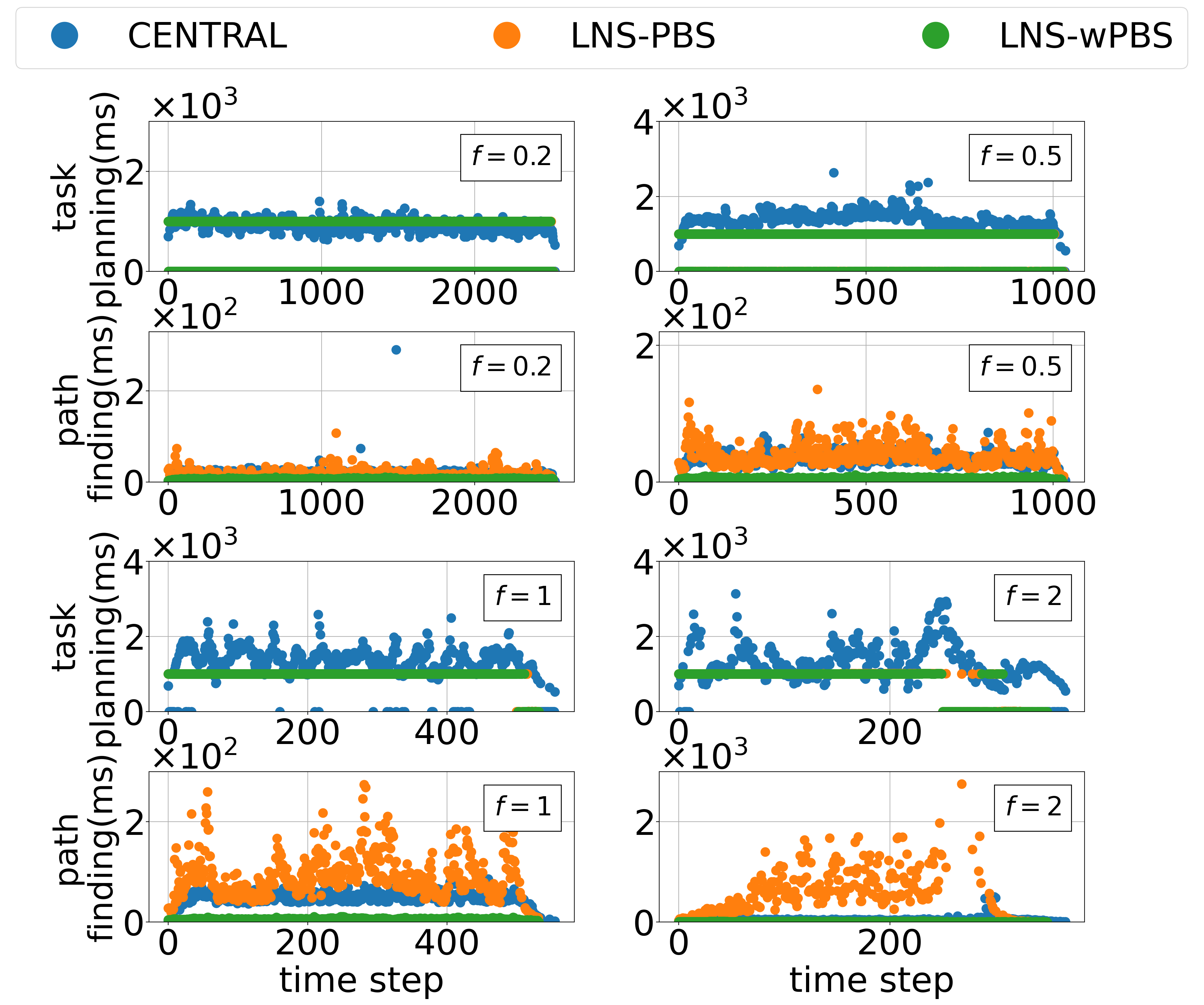}
\caption{Task-assignment and path-finding runtimes  on MAPD instances in \swarehouse\ with $M=40$. When an algorithm is not triggered at a timestep, we plot the dot at 0s. The orange dots in the task-assignment-runtime figures are covered by the green dots since the task-assignment runtimes of LNS-PBS and LNS-wPBS are controlled by the runtime limit of LNS and thus are almost the same.}
\label{time_breakdown}
\end{figure}

\textbf{Task-Assignment Algorithms on MG-MAPD Instances.} TABLE \ref{tp_heuristics} compares LNS-wPBS using different task-assignment algorithms, namely LNS, that is our original task-assignment algorithm, greedy-LNS, that modifies our LNS by using the greedy heuristic from \cite{RopkeTS06, ChenIEEE21} to construct the initial task assignment, and Hungarian, that uses our Hungarian-based insertion to find a task assignment (but does not improve it via LNS).
In most cases, LNS yields the smallest service time, which indicates that our Hungarian-based insertion is more effective than the greedy heuristic from \cite{RopkeTS06, ChenIEEE21} for constructing the initial task assignment, and our LNS improves the initial task assignment, even though the runtime limit of LNS is only 1s. 

\begin{table}[t!]
\caption{Service time on MG-MAPD instances in \swarehouse. The gap is measured between Hungarian and LNS.}
\centering
\Huge
\scalebox{0.3}{
\begin{tabular}{c|c|r|r|r|r}
\thickhline
$f$ & $M$ & Hungarian & Greedy-LNS & LNS & Gap\\
\thickhline
\multirow{5}{*}{2} 
& 10 & 623.87 & \textbf{590.54} & 597.05 &-4.2\% \\
& 20  & 283.83& 277.65& \textbf{269.87} & -4.9\%\\
& 30 & 173.15 & 177.21& \textbf{170.07} & -1.7\% \\
& 40  & \textbf{122.21} & 129.57 & 122.24 & +0.0\% \\
& 50 & 98.10 & 99.58& \textbf{95.94} &  -2.6\%\\
\hline
\multirow{5}{*}{5} & 10 & 683.50 & \textbf{637.66} &  645.21 &-5.6\% \\
& 20 & 332.23& 327.57& \textbf{318.55} & -4.1\%\\
& 30  & 224.21 & 221.70 & \textbf{218.27} &-2.6\%\\
& 40  & 168.11 & 173.47 & \textbf{164.80} &  -1.9\%\\
& 50& 141.61 & 141.95 & \textbf{139.86} &  -1.2\%\\
\hline
\multirow{5}{*}{10} & 10  & 683.60 & 654.26 & \textbf{650.63} & -4.8\%\\
& 20 &  343.88 & 341.56 & \textbf{ 336.97} & -2.0\%\\
& 30  &239.39 & 239.20 & \textbf{ 229.76} & -4.0\%\\
& 40 & 184.94 & 187.55 & \textbf{178.19} & -3.6\%\\
& 50  & 157.01 & 156.07 & \textbf{149.91}& -4.5\%\\
\thickhline
\end{tabular}
}
\label{tp_heuristics}
\end{table}

\textbf{Look-Ahead Horizons on MG-MAPD Instances.} TABLE \ref{look_ahead_horizon} shows that knowing future tasks helps planning to obtain smaller service times, but the benefit diminishes for longer look-ahead horizons. For the MG-MAPD instances that we test, extending the look-ahead horizon from 0 to 5 always reduces the service times, but extending it from 5 to 10 increases the service times for small numbers of agents. We suspect that this is so because we sometimes need to sacrifice the service times of the first few tasks in order to optimize entire task sequences. Furthermore, the task sequences for MG-MAPD instances with large look-ahead horizons and small numbers of agents can be very long and change frequently, meaning that the agents never execute them completely as planned.

\begin{table}[t!]
\caption{Service time on MG-MAPD instances in \swarehouse\ with $f=2$. ``LA$x$'' means that LNS-wPBS looks $x$ batches of tasks ahead. The numbers in parentheses are the gaps measured with respect to LA0.}
\centering
\Huge
\scalebox{0.3}{
\begin{tabular}{c|c|c|c|c}
\thickhline
$M$ & LA0 & LA1 & LA5 & LA10\\
\thickhline
10 & 58.93 &54.07 (-8.2\%)&\textbf{48.89} (-17.0\%)&65.25 (+10.6\%)\\
20 & 46.52 & 41.85 (-10.0\%)& \textbf{37.94} (-18.4\%)& 38.12 (-18.0\%)\\
30 & 45.36 & 41.59 (-8.3\%)& \textbf{38.30} (-15.5\%)&38.41 (-15.3\%)\\
40 & 45.17 & 41.99 (-7.0\%)& \textbf{38.80} (-14.1\%)&38.84 (-14.0\%)\\
50 & 45.00 & 41.57 (-7.6\%)& \textbf{39.11} (-13.0\%)&39.31 (-12.6\%)\\
\thickhline
\end{tabular}
}
\label{look_ahead_horizon}
\end{table}
\section{CONCLUSIONS}
In this work, we proposed two variants of a decoupled MAPD algorithm that assigns task sequences to agents and plans collision-free paths for them through the corresponding sequences of goal locations. The first variant, LNS-PBS, is complete for well-formed MAPD instances, and the second variant, LNS-wPBS, is more efficient and stable. 
Empirically, both of them produce solutions with smaller service times than the state-of-the-art MAPD algorithms, and LNS-wPBS can scale to a thousand agents with thousands of tasks in a large warehouse. As a further contribution, our algorithm extends to Multi-Goal MAPD, where tasks have different numbers of goal locations, and is capable of handling semi-online settings where we know the tasks in the near future.

In future work, we intend to improve our algorithms by letting them assign tasks based on actual path costs. We also intend to use a more realistic robotic simulator to demonstrate the potential of our algorithms to run on actual robots.


\bibliographystyle{./bibliography/IEEEtran} 
\bibliography{./bibliography/IEEEexample}
\end{document}